%%%% ijcai19-multiauthor.tex

\typeout{Computing Approximate Equilibria in Sequential Adversarial Games by Exploitability Descent}

% These are the instructions for authors for IJCAI-19.

% On the number of pages.. From the CFP: https://www.ijcai19.org/call-for-papers.html
%Submitted technical papers must be no longer than seven pages in total: six pages for the main text of the paper (including all figures but excluding references), and one additional page for references. Note that the references page can only include references. For accepted papers, up to two additional pages may be purchased at an additional cost per page, but note that at the time of submissions, papers should adhere to the guidelines above: 6 pages for main text plus one page for references. 

\documentclass{article}
\pdfpagewidth=8.5in
\pdfpageheight=11in
% The file ijcai19.sty is NOT the same than previous years'
\usepackage{ijcai19}

% Recommended, but optional, packages for figures and better typesetting:
\usepackage{microtype}
\usepackage{graphicx}
\usepackage{subfigure}
\usepackage{booktabs} % for professional tables
\usepackage{bm} % for professional tables
\usepackage{amsmath}
\usepackage{amsthm}
\usepackage{amssymb}
\usepackage{bbm}
\usepackage{xcolor}
\usepackage{algorithm}
\usepackage{algorithmic}
\usepackage{epsdice}
\usepackage[algo2e, noend, noline, ruled, linesnumbered]{algorithm2e}

\providecommand{\DontPrintSemicolon}{\dontprintsemicolon}
\DontPrintSemicolon

% DM: Is this necessary to use mathds? It decreases the number of errors but doesn't change whether or not the paper compiles.
\usepackage{dsfont}

\newcommand{\beq}{\begin{equation}}
\newcommand{\eeq}{\end{equation}}

\newcommand{\beqa}{\begin{eqnarray}}
\newcommand{\eeqa}{\end{eqnarray}}

\newcommand{\beqan}{\begin{eqnarray*}}
\newcommand{\eeqan}{\end{eqnarray*}}

\newcommand{\argmin}{\operatornamewithlimits{argmin}}
\newcommand{\bE}{\mathbb{E}}

\newcommand{\bpi}{{\bm\pi}}

\newcommand{\bq}{\mathbf{q}}

\newcommand{\btheta}{\boldsymbol\theta}
\newcommand{\cA}{\mathcal{A}}
\newcommand{\cB}{\mathcal{B}}

\newcommand{\cS}{\mathcal{S}}

\newcommand{\cZ}{\mathcal{Z}}

\newcommand{\defword}[1]{\textbf{\boldmath{#1}}}
\newcommand{\ie}{{\it i.e.}~}

\newtheorem{definition}{Definition}

\newtheorem{theorem}{Theorem}
\newtheorem{corollary}{Corollary}
\newtheorem{lemma}{Lemma}

\newcommand{\abs}[1]{\left|#1\right|}

\newtheorem{remark}{Remark}
\newcommand{\subdiff}{\partial}
\newcommand{\grad}{\nabla}
\newcommand{\as}{\doteq}
\newcommand{\bec}{\bm}
\newcommand{\BrSet}[1]{\textsc{BR}(#1)}
\newcommand{\NashConv}{\textsc{NashConv}}

\newcommand{\subex}[1]{\left(#1\right)}
\newcommand{\subblock}[1]{\left[#1\right]}

\newcommand{\ip}[2]{\langle #1, #2 \rangle}
\newcommand{\reals}{\mathds{R}}

\newcommand{\indicator}[1]{\mathds{1}_{#1}}
\newcommand{\e}[1]{e^{#1}}

\newcommand{\utilityRange}{\Delta}
\newcommand{\policy}{\pi}
\newcommand{\expertValue}{\upsilon}
\newcommand{\expertValueSet}{\Upsilon}
\newcommand{\eExpertValue}{\bar{\expertValue}}
\newcommand{\Actions}{\cA}
\newcommand{\action}{a}
\newcommand{\numActions}{\abs{\Actions}}
\newcommand{\temperature}{\tau}
\newcommand{\gaVariable}{\theta}
\newcommand{\prediction}{x}
\newcommand{\stepSize}{\alpha}
\newcommand{\projection}{\Pi}
\newcommand{\softmaxTransfer}{\projection_{\text{sm}}}

% Use the postscript times font!
\usepackage{times}
\usepackage{soul}
\usepackage{url}
\usepackage[utf8]{inputenc}
\usepackage[small]{caption}
\usepackage{graphicx}
\usepackage{amsmath}
\usepackage{booktabs}
\urlstyle{same}

% using "hidelinks" create the links but hide them, while draft remove the links altogether.
% jblespiau@ used that because I was getting an error: "\pdfendlink ended up in different nesting level than \pdfstartlink."
%\usepackage[hidelinks, draft, debug]{hyperref}

%% Note center!
%% Two steps
%%   1. Add a new counter
%%   2. Add the corresponding command below it 

\definecolor{darkgreen}{RGB}{0,125,0}
\definecolor{darkblue}{RGB}{0,0,125}

\newcounter{mlNoteCounter}
\newcounter{jpNoteCounter}
\newcounter{jblNoteCounter}
\newcounter{ktNoteCounter}
\newcounter{elNoteCounter}
\newcounter{dmNoteCounter}

%% HIDE COMMENTS
%\newcommand{\mlnote}[1]{}
%\newcommand{\jpnote}[1]{}
%\newcommand{\jblnote}[1]{}
%\newcommand{\elnote}[1]{}
%\newcommand{\dmnote}[1]{}

% the following package is optional:
%\usepackage{latexsym} 

% Following comment is from ijcai97-submit.tex:
% The preparation of these files was supported by Schlumberger Palo Alto
% Research, AT\&T Bell Laboratories, and Morgan Kaufmann Publishers.
% Shirley Jowell, of Morgan Kaufmann Publishers, and Peter F.
% Patel-Schneider, of AT\&T Bell Laboratories collaborated on their
% preparation.

% These instructions can be modified and used in other conferences as long
% as credit to the authors and supporting agencies is retained, this notice
% is not changed, and further modification or reuse is not restricted.
% Neither Shirley Jowell nor Peter F. Patel-Schneider can be listed as
% contacts for providing assistance without their prior permission.

% To use for other conferences, change references to files and the
% conference appropriate and use other authors, contacts, publishers, and
% organizations.
% Also change the deadline and address for returning papers and the length and
% page charge instructions.
% Put where the files are available in the appropriate places.

\title{Computing Approximate Equilibria in Sequential\\Adversarial Games by Exploitability Descent}

% Undo after submission
%\author{
%First Author$^1$\footnote{Contact Author}\and
%Second Author$^2$\and
%Third Author$^{2,3}$\And
%Fourth Author$^4$\\
%\affiliations
%$^1$First Affiliation\\
%$^2$Second Affiliation\\
%$^3$Third Affiliation\\
%$^4$Fourth Affiliation\\
%\emails
%\{first, second\}@example.com,
%third@other.example.com,
%fourth@example.com
%}
\author{
Edward Lockhart$^1$ \and Marc Lanctot$^1$ \and Julien P\'{e}rolat$^1$ \and\\ Jean-Baptiste Lespiau$^1$ \and Dustin Morrill$^{1,2}$ \and Finbarr Timbers$^1$ \and Karl Tuyls$^1$\\
\affiliations
$^1$DeepMind\\
$^2$University of Alberta, Edmonton, Canada\\
\emails
\{locked, lanctot, perolat, jblespiau, dmorrill, finbarrtimbers, karltuyls\}@google.com, morrill@ualberta.ca\\
}

\begin{document}

\maketitle

\begin{abstract}
In this paper, we present {\it exploitability descent}, a new algorithm to compute
approximate equilibria in two-player zero-sum extensive-form games with imperfect information,
by direct policy optimization against worst-case opponents. We prove that when following this optimization, when using counterfactual values, the exploitability of a player's strategy converges asymptotically to zero, and hence when both players employ this optimization, the joint policies converge to a Nash equilibrium.
Unlike fictitious play (XFP) and counterfactual regret minimization (CFR), our convergence result pertains to the policies being optimized rather than the average policies.
Our experiments demonstrate convergence rates comparable to XFP and CFR in four benchmark games in the tabular case. Using function approximation, we find that our algorithm outperforms the tabular version in two of the games, which, to the best of our knowledge, is the first such result in imperfect information games among this class of algorithms.
\end{abstract}

%\jblnote{Equations sometimes end with nothing, a coma, a dot. Enforcing coherence could be a plus (I usually use nothing)}
% equations should be treated as part of a sentence, so whether it's a comma or period depends on whether we're edning the sentence or continuing it
% ok, I will try to find them all but please just fix whenever you see it. -- ML

\section{Introduction}

Extensive-form games model sequential interactions between multiple agents, each of which 
maximize their own utility. Classic examples are perfect information games (e.g. chess and Go),
which have served as milestones for measuring the progress of artificial
intelligence~\cite{Campbell02deepblue,Silver16Go}.
When there are simultaneous moves, such as in Markov games, the players may need stochastic policies to guarantee their worst-case expected utility, and must use linear programming at each state for value back-ups. Computing policies for imperfect information games is much more difficult: no Bellman operator exists, so approximate dynamic programming is not applicable; exact equilibrium solutions can be found by sequence-form linear programming~\cite{SequenceFormLPs,Shoham09}, but
these techniques do not scale to very large games.

The challenge domain for imperfect information has been computer Poker, which has driven much of the
progress in computational approaches to equilibrium-finding~\cite{Rubin11Poker}. While there are 
gradient descent techniques that can find an $\epsilon$-Nash equilibrium in $O(\frac{1}{\epsilon})$
iterations~\cite{EGT-OO}, the dominant technique has been counterfactual regret minimization
(CFR)~\cite{CFR}. 
Based on CFR, recent techniques have solved heads-up limit Texas Hold'em~\cite{Bowling15Poker} and beat human professionals in no-limit Texas Hold'em~\cite{Moravcik17DeepStack,Brown17Libratus}.

Other techniques have emerged in recent years, based first on fictitious play (XFP)~\cite{Heinrich15FSP},
and generalized to double oracle and any meta-game solver over sets of policies~\cite{Lanctot17PSRO}. Both require a subroutine that computes a best response (an ``oracle''). Here, reinforcement learning can be used to compute approximate oracles, and function approximation can be used to generalize over the state space without domain-specific abstraction mechanisms. Hence, deep neural networks can trained from zero knowledge as in AlphaZero~\cite{Silver18AlphaZero}. 
Policy gradient techniques are also compatible with function approximation in this setting~\cite{srinivasan2018actor}. Combining data buffers with CFR using regression to predict regrets has also shown promise in medium-sized poker variants~\cite{Waugh15solving,Brown18DeepCFR}.

In this paper, we introduce a new algorithm for computing approximate Nash equilibria. Like XFP, best responses are computed at each iteration. Unlike XFP, players optimize
their policies directly against their worst-case opponent. 
When using tabular policies and $\ell_2$ projections after policy updates, the sequence of policies will contain an $\epsilon$-Nash equilibrium, unlike CFR and XFP that only converge-in-average.
Our algorithm works well with function approximation, as the problem can be expressed directly as a policy gradient optimization. Our experiments show convergence rates comparable to XFP and CFR in the tabular setting, exhibit generalization over the state space using neural networks in four different games.

At the time of original submission, we were unaware of a similar algorithm recently presented at the Deep RL Workshop NeurIPS 2018: Self-Play Against a Best Response (SPAR)~\cite{Tang18SPAR}. The work we present in this paper was done independently.
In this paper, we provide convergence guarantees, as well as results in both the tabular and neural network cases. We do so on four benchmark games (both commonly used poker benchmarks used in~\cite{Tang18SPAR} and two additional games), whereas results of SPAR focus on the sample-based setting not covered in this paper.

\section{Background and Terminology}
\label{sec:background}

An \defword{extensive-form game} describes a sequential interaction between \defword{players} $i \in \{ 1, 2, \cdots, n\} \cup \{ c \}$, where $c$ is considered a special player called \defword{chance} with a fixed stochastic policy that determines the transition probabilities given states and actions.
We will often use $-i$ to refer to all the opponents of $i$. In this paper, we focus on the $n=2$ player setting.

The game starts in the empty history $h = \emptyset$. On each turn, a player $i$ chooses an action $a \in \cA_i$,
changing the history to $h' = ha$.
Here $h$ is called a prefix history of $h'$, denoted $h \sqsubset h'$.
The full history is sometimes also called a {\it ground} state because it uniquely identifies the true state, since chance's actions are included.
In poker, for example, a ground state would include all the players' private cards.
We define an \defword{information state} $s \in \cS$ for player $i$ as the state as perceived by an agent which is consistent with its observations.
%\jblnote{state is undefined. It is world states? histories? I would have expected: we define "information state" s to be all histories consistent with their observations.}
Formally, each $s$ is a set of histories, specifically $h, h' \in s \Leftrightarrow$ the sequence of of player $i$'s observations along $h$ and $h'$ are equal.
%\jblnote{I find this definition strange. $h \in s$ is not a mathematical set, so it is strange to define this as "all histories ...". For me "We write $I_i(h)$ all histories consistent with the player observations", or "we write $h \in s$ an history consistent with the player information state $s$.}
% In other words, $i$ cannot tell those histories apart.
In poker, an information state groups together all the histories that differ only in the private cards of $-i$. Denote $\cZ$ the set of terminal histories, each corresponding to the end of a game, and a utility to each player $u_i(z)$ for $z \in \cZ$.
We also define $\tau(s)$ as the player whose turn it is at $s$,
and $\cZ(h)$ the subset of terminal histories that share $h$ as a prefix.

Since players cannot observe the ground state $h$, policies are defined as $\pi_i : \cS_i \rightarrow \Delta(\cA_i)$,
where $\Delta(\cA_i)$ is the set of probability distributions over $\cA_i$.
Each player tries to maximize their expected utility given the initial starting history $\emptyset$. We assume finite games, so every history $h$ is bounded in length. 
The expected value of a joint policy $\bpi$ (all players' policies) for player $i$ is defined as
\begin{equation}
v_{i,\bpi} = \bE_{z \sim \bpi}[ u_i(z) ], 
\label{eq:root-value}
\end{equation}
where the terminal histories $z \in \cZ$ are composed of actions drawn from the joint policy.
We also define state-action values for joint policies. The value $q_{i,\bpi}(s,a)$ represents the expected return starting at state $s$, taking action $a$, and playing $\bpi$:
\begin{align}
q_{i,\bpi}(s,a) = & \bE_{z \sim \bpi}[ u_i(z)~|~h \in s, ha \sqsubseteq z] \notag \\
                = & \sum_{h \in s, z \in \cZ(h)} Pr(h | s) u_i(z) \notag\\
                = & \frac{\sum_{h \in s} \eta_{\bpi}(h) q_{i,\bpi}(h, a)}{\sum_{h \in s} \eta_{\bpi}(h)},
\label{eq:q-values}
\end{align}
where $q_{i,\bpi}(h, a) = \bE_{z \sim \bpi}[ u_i(z)~|~ha \sqsubseteq z] = \sum_{h \in s, z \in \cZ(h)}\eta_{\bpi}(h,z) u_i(z)$
is the expected utility of the ground state-action pair $(h,a)$, and $\eta_\bpi(h)$ is the probability of reaching $h$ under the policy $\bpi$.
We make the common assumption that players have \defword{perfect recall}, i.e. they do not forget anything they have observed while playing. Under perfect recall, the distribution of the states can be obtained only
% \jblnote{"obtained only from the opp...?}
from the opponents' policies using Bayes' rule (see~\cite[Section 3.2]{srinivasan2018actor}).

Each player $i$ tries to find a policy that maximizes their own value $v_{i,\bpi}$. However, this is difficult to do independently since the value depends on the joint policy, not just player $i$'s policy. 
A \defword{best response} policy for player $i$ is defined to be $b_i(\bpi_{-i}) \in \textsc{BR}(\bpi_{-i}) = \{ \bpi_i~|~v_{i, (\bpi_i, \bpi_{-i})} = \max_{\bpi_i'} v_{i,(\bpi_i', \bpi_{-i})} \}$.
Given a joint policy $\bpi$, the \defword{exploitability} of a policy $\bpi_{-i}$ is how much the other player
could gain if they switched to a best response: $\delta_i(\bpi) = \max_{\bpi'_{i}} v_{i, (\bpi'_i, \bpi_{-i})} - v_{i,\bpi}$.
In two-player zero-sum games, an $\epsilon$-minmax (or $\epsilon$-Nash equilibrium) policy is one where $\max_i \delta_i(\bpi) \le \epsilon$. A Nash equilibrium is achieved when $\epsilon = 0$. A common metric to measure the distance to Nash is \textsc{NashConv}$(\bpi) = \sum_i \delta_i(\bpi)$.

\subsection{Extensive-Form Fictitious Play (XFP)}

Extensive-form fictitious play (XFP) is equivalent to standard fictitious play, except that it operates in the extensive-form representation of the game~\cite{Heinrich15FSP}. In fictitious play, the joint policy is initialized arbitrarily (e.g. uniform random distribution at each information state), and players learn by aggregating best response policies.
\begin{algorithm2e}[h!]
\SetKwInOut{Input}{input}\SetKwInOut{Output}{output}
\Input{$\bpi^0$ --- initial joint policy}
\caption{Fictitious Play \label{alg:fp}}
\For{$t \in \{1, 2, \cdots\}$}{
  \For{$i \in \{1, \ldots, n\}$}{
    Compute a best response $\bec{b}_i^t(\bpi_{-i}^{t-1})$ \;
    Update average policy $\bpi^t$ to include $\bec{b}_i^t$ \;
  }
}
\end{algorithm2e}
The extensive-form version, XFP, requires game-tree traversals to compute the best responses and specific update rules that account for the reach probabilities to ensure that the updates are equivalent to the classical algorithm, as described in~\cite[Section 3]{Heinrich15FSP}.
Fictitious play converges to a Nash equilibrium asymptotically in two-player zero-sum games. Sample-based approximations to the best response step have also been developed~\cite{Heinrich15FSP} as well as function approximation methods to both steps~\cite{Heinrich16}. Both steps have also been generalized to other best response
algorithms and meta-strategy combinations~\cite{Lanctot17PSRO}.

\subsection{Counterfactual Regret Minimization (CFR)}

CFR decomposes the full regret computation over the tree into per information-state regret tables and updates~\cite{CFR}.
Each iteration traverses the tree to compute the local values and regrets, updating cumulative
regret and average policy tables, using a local regret minimizer to derive the current policies
at each information state.

The quantities of interest are \defword{counterfactual values}, which are similar to $Q$-values, but
differ in that they weigh only the opponent's reach probabilities, and are not normalized. Formally,
let $\eta_{-i,\bpi}(h)$ be {\it only the opponents' contributions} to the probability of reaching $h$
under $\bpi$.
Then, similarly to equation \ref{eq:q-values}, we define counterfactual values: $q^c_{i,\bpi}(s, a) = \sum_{h \in s} \eta_{-i,\bpi}(h) q_{i,\bpi}(h, a)$, and $v^c_{i,\bpi}(s) = \sum_{a \in \cA_i} \pi_i(s,a) q^c_{i,\bpi}(s, a)$.
On each iteration $k$, with a joint policy $\bpi^k$,
CFR computes a counterfactual regret $r(s, a) = q^c_{i,\bpi^k}(s, a) - v^c_{i,\bpi^k}(s)$
for all information states $s$, and a new policy from the cumulative regrets of $(s,a)$ over the iterations 
using regret-matching~\cite{Hart00}. The average policies converge to an $\epsilon$-Nash equilibrium in $O(|\cS_i|^2 |\Actions_i| / \epsilon^2)$ iterations.

\subsubsection{CFR Versus a Best Response Oracle (CFR-BR)}

Instead of both players employing CFR (CFR-vs-CFR), each player can use CFR versus their worst-case
(best response) opponent, i.e. simultaneously running CFR-vs-BR and BR-vs-CFR. This is the main idea behind
counterfactual regret minimization against a best response (CFR-BR) algorithm~\cite{Johanson12CFRBR}.
The combined average policies of the CFR players is also guaranteed to converge to an $\epsilon$-Nash
equilibrium. In fact, the current strategies also converge with high probability. Our convergence analyses
are based on CFR-BR, showing that a policy gradient versus a best responder also converges
to an $\epsilon$-Nash equilibrium.

\subsection{Policy Gradients in Games}

We consider policies $\bm{\pi}_{\bm{\theta}} = (\pi_{i, \theta_i})_i$ each policy are parameterized by a vector of parameter $(\theta_i)_i = \bm{\theta}$. Using the likelihood ratio method, the gradient of $v_{i, \bm{\pi_{\theta}}}$ with respect to the vector of parameters $\theta_i$ is:

\begin{align}
\grad_{\theta_i} v_{i,\bm{\pi_\theta}} &= \sum \limits_{s \in S_i} \left(\sum \limits_{h \in s} \eta_{\bm{\pi}}(h) \right)  \sum_{a} \grad_{\theta_i} \pi_{i, \theta_i}(s, a)  q_{i, \bm{\pi_\theta}}(s, a)
\label{eq:policy.gradient.iig}
\end{align}
This result can be seen as an extension of the policy gradient Theorem~\cite{Sutton00policy,glynn1995likelihood,williams1992simple,baxter2001infinite}
to imperfect information games and has been used under several forms:
for a detailed derivation, see~\cite[Appendix D]{srinivasan2018actor}.

The critic ($q_{i, \bm{\pi_\theta}}$) can be estimated in many ways (Monte Carlo Return~\cite{williams1992simple} or using a critic for instance in~\cite{srinivasan2018actor} in the context of games. Then:
\beqan
\theta_i &\leftarrow &\theta_i + \alpha \sum_{l=0}^K   \mathds{1}_{i=\tau(s_l)} \sum_{a} \grad_{\theta_i} \pi_{i, \theta_i}(s_l, a)  \hat q_{i, \bm{\pi_\theta}}(s_l, a),
\eeqan
where $\alpha$ is the learning rate used by the algorithm and $\hat q_{i, \bm{\pi_\theta}}(s_l, a)$ is the estimation of the return used.

\section{Exploitability Descent \label{sec:ed}}

Exploitability Descent~(ED) follows the basic form of the classic convex-concave optimization problem for solving matrix games~\cite{GTK51,Boyd04}.
Conceptually, the algorithm is uncomplicated and shares the outline of fictitious play: on each iteration, there are two steps that occur for each player. The first step is identical to fictitious play: compute the best response to each player's policy. The second step then performs gradient ascent on the policy to increase each player's utility against the respective best responder (aiming to decrease each player's exploitability). 
\begin{algorithm2e}[h!]
\SetKwInOut{Input}{input}\SetKwInOut{Output}{output}
\Input{$\bpi^0$ --- initial joint policy}
\caption{Exploitability Descent (ED) \label{alg:ed}}
\For{$t \in \{1, 2, \cdots\}$}{
  \For{$i \in \{1, \cdots, n\}$}{
    Compute a best response $\bec{b}_i^t(\bpi_{-i}^{t-1})$ \;
  }
  \For{$i \in \{1, \cdots, n\}, s \in \cS_i$}{
      Define $\bec{b}^t_{-i} = \{ \bec{b}^t_j \}_{j \not= i}$ \;
      Let $\bq^{\bec{b}}(s) = \textsc{ValuesVsBRs}(\bpi^{t-1}_i(s), \bec{b}^t_{-i})$ \label{alg:ed-values}\;
      $\bpi^t_i(s) = \textsc{GradAscent}(\bpi^{t-1}_i(s), \alpha^t, \bq^{\bec{b}}(s))$\label{alg:ed-update}\;
  }
}
\end{algorithm2e}
The change in the second step is important for two reasons. First, it leads to a convergence of the policies that are being optimized without having to compute an explicit average policy, which is complex in the sequential setting. Secondly, the policies can now be easily parameterized (i.e. using e.g. deep neural networks) and trained using policy gradient ascent without storing a large buffer of previous data. 

The general algorithm is outlined in Algorithm~\ref{alg:ed}, where $\alpha^t$ the learning rate on iteration $t$.
Two steps (lines \ref{alg:ed-values} and \ref{alg:ed-update}) are intentionally unspecified: we will show properties for two specific instantiations of this general ED algorithm.
The quantity $\bq^{\bec{b}}$ refers to a set of expected values for player $i = \tau(s)$, one for each action at $s$ using $\bpi^{t-1}_i$ against a set of individual best responses.
The \textsc{GradientAscent} update step unspecified for now as we will describe several forms, but the main idea is to increase/decrease the probability of higher/lower utility actions via the gradients of the value functions, and project back to the space of policies.

\subsection{Tabular ED with $q$-Values and $\ell_2$ Projection}
\label{sec:edq}

For a vector of $|\cA|$ real numbers $\btheta_s$, define the \defword{simplex} as $\Delta_s = \{ \btheta_{s,a}~|~\btheta_s \ge \mathbf{0}, \sum_a \theta_{s,a} = 1 \}$, and the $\ell_2$ projection as
$\Pi_{\ell_2}(\btheta_s) = \argmin_{\btheta_s' \in \Delta_s}||\btheta_s' - \btheta_s||_2$.

Let $\bpi_{\btheta}$ be a joint policy parameterized by $\btheta$, and $\bpi_{\btheta_i}$ refer to the
portion of player $i$'s parameters (\ie in tabular form $\{ \btheta_s~|~\tau(s) = i \}$). Here each parameter is a probability of an action at a particular state: $\theta_{s,a} = \pi_{\btheta}(s,a)$.
We refer to TabularED($q$, $\ell_2$) as an instance of exploitability descent with
\begin{equation}
\bq^{\bec{b}}(s) = \{ q_{i,(\bpi^{t-1}_{\theta}, \bec{b}^t_{-i})}(s, a) \}_{a \in \cA},
\label{eq:edq-values}
\end{equation}
and the policy gradient ascent update defined to be
\begin{eqnarray}
\theta^t_s & = & \Pi_{\ell_2} \left(\btheta^{t-1}_s + \alpha^t \ip{\nabla_{\btheta_s} \bpi^{t-1}_{\btheta}(s)}{\bq^{\bec{b}}(s)}\right) \notag\\
           & = & \Pi_{\ell_2} \left(\btheta^{t-1}_s + \alpha^t \bq^{\bec{b}}(s)\right),  \label{eq:edq-update}
\end{eqnarray}
where the Jacobian $\nabla_{\btheta_s} \bpi^{t-1}_{\btheta}(s)$ is an identity matrix because each parameter $\theta_{s,a}$ corresponds directly to the probability $\pi_{\btheta}(s,a)$, and $\ip{\cdot}{\cdot}$ is the usual matrix inner product.

\subsection{Tabular ED with Counterfactual Values and Softmax Transfer Function}
\label{sec:edcfv}

For some vector of real numbers, ${\btheta_s}$, define softmax$(\btheta_s) = \{ \softmaxTransfer(\btheta_s) \}_a = \{ \exp(\theta_{s,a}) / \sum_{a'} \exp(\theta_{s,a'}) \}_a$.
Re-using the tabular policy notation from the previous section, we now define a different instance of
exploitability descent. We refer to TabularED($q^c$, softmax) as the algorithm that specifies
$\bpi_{\btheta}(s) = \softmaxTransfer(\btheta_s)$,
\begin{equation}
\bq^{\bec{b}}(s) = \{ q^c_{i,\bpi}((\bpi^{t-1}_{\btheta}, \bec{b}^t_{-i}), s, a) \}_{a \in \cA},
\label{eq:edcfv-values}
\end{equation}
and the policy parameter update as
\begin{equation}
\btheta^t_s = \btheta^{t-1}_s + \alpha^t \ip{\nabla_{\btheta_s} \bpi^{t-1}_{\btheta}(s)}{\bq^{\bec{b}}(s)},
\label{eq:edcfv-update}
\end{equation}
where $\nabla_{\btheta_s} \bpi^{t-1}_{\btheta}(s)$ represents the Jacobian of softmax.

\subsection{Convergence Analyses}

We now analyze the convergence guarantees of ED. We give results for two cases: first, in cyclical perfect information games and 
Markov games, and secondly imperfect information games. All the proofs are found in the Appendix~\ref{app:proofs}.

\subsubsection{Cyclical Perfect Information Games and Markov Games}
The following result extends the policy gradient theorem~\cite{Sutton00policy,glynn1995likelihood,williams1992simple,baxter2001infinite} to the zero-sum two-player case. It proves that a generalized gradient of the worst-case value function can be estimated from experience as in the single player case.

\begin{theorem}[Policy Gradient in the Worst Case]
    \label{thm:pg-worst-case}
    The gradient of policy $\bec{\pi_{\theta_i}}$'s value, $v_{i,(\bec{\pi_{\theta_i}}, \bec{b})}$, against a best response,
    $\bec{\beta} \as \bec{b_{-i}(\pi_{\theta_i})} \in \BrSet{\pi_{\theta_i}}$ is a generalized gradient (see~\cite{clarke1975generalized}) of $\bec{\pi_{\theta_i}}$'s worst-case value function,
    \begin{align}
        & \grad_{\theta_i} v_{i,(\pi_{\theta_i}, b_{-i}(\pi_{\theta_i}))} \in \subdiff \min_{\pi_{-i}} v_{i,(\pi_{\theta_i}, \pi_{-i})}.\nonumber
    \end{align}

\end{theorem}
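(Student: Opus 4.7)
The plan is to view the worst-case value function as an infimal convolution and apply the nonsmooth version of Danskin's theorem (due to Clarke, and also discussed by Bertsekas). Define
\begin{align*}
V_i(\theta_i) \as \min_{\pi_{-i}} v_{i,(\pi_{\theta_i}, \pi_{-i})}.
\end{align*}
For each fixed opponent policy $\pi_{-i}$, the map $\theta_i \mapsto v_{i,(\pi_{\theta_i}, \pi_{-i})}$ is continuously differentiable, since by equation~(\ref{eq:policy.gradient.iig}) it is a smooth function of $\pi_{\theta_i}$ (a multilinear combination of reach probabilities and terminal utilities) composed with a differentiable parameterization $\theta_i \mapsto \pi_{\theta_i}$. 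The set of mixed policies is compact and convex, so the minimum defining $V_i$ is attained and $V_i$ is locally Lipschitz, hence Clarke-differentiable.

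The main step is to invoke the generalized Danskin-Clarke envelope theorem: if $V_i(\theta_i) = \min_{y \in Y} g(\theta_i, y)$ with $Y$ compact and $g(\cdot, y)$ continuously differentiable uniformly in $y$, then the Clarke subdifferential satisfies
\begin{align*}
\subdiff V_i(\theta_i) = \mathrm{conv}\set{\grad_{\theta_i} g(\theta_i, y^*) : y^* \in \arg\min_{y \in Y} g(\theta_i, y)}.
\end{align*}
Taking $g(\theta_i, \pi_{-i}) = v_{i,(\pi_{\theta_i}, \pi_{-i})}$ and $y^* = b_{-i}(\pi_{\theta_i})$, which lies in the argmin set by the definition of a best response in a two-player zero-sum game (best-responding for $-i$ against $i$ is equivalent to minimizing $v_i$), the single gradient $\grad_{\theta_i} v_{i,(\pi_{\theta_i}, b_{-i}(\pi_{\theta_i}))}$ is trivially a member of that convex hull, which gives the claim.

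The key obstacle is simply stating and citing the correct nonsmooth envelope theorem: the classical Danskin statement requires a \emph{unique} minimizer to conclude differentiability of $V_i$, but in zero-sum games best responses are generally non-unique (any convex combination of pure best responses works), so we must use the generalized gradient framework of \cite{clarke1975generalized} rather than the smooth version. Once that is cited, the proof reduces to two routine observations: (i) compactness of the mixed-policy simplex to guarantee that the min is attained and $V_i$ is Lipschitz, and (ii) multilinearity/smoothness of $v_i$ in $\theta_i$ for fixed $\pi_{-i}$. No specific structure of the extensive-form game beyond the existence of the gradient in equation~(\ref{eq:policy.gradient.iig}) is required.
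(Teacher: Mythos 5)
Your proposal is correct and follows essentially the same route as the paper's proof: both invoke the Clarke generalized-gradient (nonsmooth Danskin/envelope) theorem from \cite{clarke1975generalized}, using compactness and smoothness of $v_{i,(\pi_{\theta_i},\pi_{-i})}$ in $\theta_i$ to identify $\subdiff \min_{\pi_{-i}} v_{i,(\pi_{\theta_i}, \pi_{-i})}$ with the convex hull of gradients evaluated at best responses, so membership of $\grad_{\theta_i} v_{i,(\pi_{\theta_i}, b_{-i}(\pi_{\theta_i}))}$ is immediate. The only difference is cosmetic: the paper additionally appeals to the policy gradient theorem to express that gradient in estimable form, which is not needed for the membership statement itself.
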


All of the proofs are found in Appendix~\ref{app:proofs} of the technical report version of the paper~\cite{ED-arXiv}.

This theorem is a natural extension of the policy gradient theorem to the zero-sum two-player case. As in policy gradient, this process is only guaranteed to converge to a local maximum of the worst case value $\min_{\pi_{-i}} v_{i,(\pi_{\theta_i}, \pi_{-i})}$ of the game but not necessarily to an equilibrium of the game. An equilibrium of the game is reached when the two following conditions are met simultaneously: (1) if the policy is tabular and (2) if all states are visited with at least some probability for all policies. This statement is proven in Appendix~\ref{GMC}.

The method is called exploitability descent because policy gradient in the worst case minimizes exploitability. In a two-player, zero-sum game, if both players independently run ED, $\NashConv$ is locally minimized.

\begin{lemma}
    \label{lem:sim-pg-min-nashconv}
    In the two-player zero-sum case, simultaneous policy gradient in the worst case locally minimizes $\NashConv$.
\end{lemma}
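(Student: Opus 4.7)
The plan is to reduce the lemma to Theorem~\ref{thm:pg-worst-case} by exploiting the zero-sum decomposition of $\NashConv$. First I would use the identity $v_{1,\bpi}+v_{2,\bpi}=0$ to rewrite
\begin{align*}
\NashConv(\bpi) &= \delta_1(\bpi)+\delta_2(\bpi) \\
&= \max_{\pi_1'} v_{1,(\pi_1',\pi_2)} + \max_{\pi_2'} v_{2,(\pi_1,\pi_2')} \\
&= \max_{\pi_1'} v_{1,(\pi_1',\pi_2)} - \min_{\pi_2'} v_{1,(\pi_1,\pi_2')},
\end{align*}
where the cross term $v_{1,\bpi}+v_{2,\bpi}$ vanishes by zero-sum. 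The key observation is that the first summand depends only on $\pi_2$ and the second only on $\pi_1$, so $\NashConv$ is separable across the two players' parameter blocks.

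Next I would apply Theorem~\ref{thm:pg-worst-case} to each player independently. For player $1$, the ED update direction is (by that theorem) an element of $\subdiff_{\theta_1} \min_{\pi_{-1}} v_{1,(\pi_{\theta_1},\pi_{-1})}$, which is a generalized ascent direction for $-\delta_2(\bpi)$ viewed as a function of $\theta_1$ alone. For player $2$, the update is a generalized ascent direction for $\min_{\pi_{-2}} v_{2,(\pi_{\theta_2},\pi_{-2})} = -\max_{\pi_1'} v_{1,(\pi_1',\pi_{\theta_2})}$, i.e.\ a descent direction for $\delta_1(\bpi)$ as a function of $\theta_2$ alone.

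Finally, since the $\theta_1$ and $\theta_2$ blocks are disjoint and $\NashConv$ separates as $\delta_1(\pi_2)+\delta_2(\pi_1)$, the concatenated update vector $(\grad_{\theta_1}v_{1,(\pi_{\theta_1},b_{-1})},\,\grad_{\theta_2}v_{2,(\pi_{\theta_2},b_{-2})})$ is a generalized descent direction for $\NashConv$. Invoking the standard property of generalized gradients that moving against an element of $\subdiff f$ with a sufficiently small step decreases $f$ when $0\notin\subdiff f$, we conclude that simultaneous ED is a local minimization of $\NashConv$, with stationary points being exactly the points at which $0$ lies in the generalized subdifferential.

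The only delicate step is the appeal to generalized (Clarke) gradients: the worst-case values are min/max over linear functions and thus only directionally differentiable, so I must state the descent conclusion for small enough stepsizes and in terms of generalized subdifferentials rather than ordinary gradients. Everything else is algebraic manipulation of the zero-sum identity; no new game-theoretic content beyond Theorem~\ref{thm:pg-worst-case} is needed.
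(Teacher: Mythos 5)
Your proposal is correct and follows essentially the same route as the paper's proof: the zero-sum cancellation of the $v_{i,\bpi}$ terms, rewriting each exploitability as the negative of the other player's worst-case value, and exploiting the disjointness of the two parameter blocks so that the subdifferential of the sum decomposes, with Theorem~\ref{thm:pg-worst-case} supplying the per-player generalized gradients. The only addition is your explicit appeal to the small-step descent property of Clarke subdifferentials, which the paper leaves implicit.
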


\subsubsection{Imperfect Information Games \label{sec:conv-iig}}

We now examine convergence guarantees in the imperfect information setting. 
There have two main techniques used to solve adversarial games in this case: the first is to rely on the sequence-form representation of policies which makes the optimization problem convex~\cite{SequenceFormLPs,EGT-OO}.
The second is to weight the values by the appropriate reach probabilities, and employ local optimizers~\cite{CFR,Johanson12CFRBR}.
Both take into account the probability of reaching information states,
but the latter allows a convenient tabular policy representation.

We prove finite time exploitability bounds for TabularED($q$, $\ell_2$), and we relate TabularED($q^c$, softmax) to a similar algorithm that also has finite time bounds.

The convergence analysis is built upon two previous results: the first is CFR-BR~\cite{Johanson12CFRBR}.
The second is a very recent result that relates policy gradient optimization in imperfect information games to CFR~\cite{srinivasan2018actor}. 
The result here is also closely related to the optimization against a worst-case opponent~\cite[Theorem 4]{Waugh14Unified}, except our policies are expressed in tabular (\ie behavioral) form rather than the sequence form.\\

\noindent {\bf \underline{Case}: TabularED($q$, $\ell_2$)}.
Recall that the parameters $\btheta = \{ \theta_{s,a} \}_{s \in \cS, a \in \cA}$ correspond to the tabular policy. For convenience, let $\btheta_s = \{ \theta_{s,a}\}_{a \in \cA}$.

We now present the main theorem\footnote{Note that this theorem is different than the one presented in the original version of this work. Please see Appendix~\ref{app:errata} for a discussion of the differences.}, which states that if both players optimize their policies using TabularED($q$, $\ell_2$), it will generate policies with decreasing regret, which combined form an approximate Nash equilibrium.
\begin{theorem}
Let TabularED($q$, $\ell_2$) be described as in Section~\ref{sec:edq} using tabular policies and the update rule in Definition~\ref{def:spgpi-sep}. 
In a two-player zero-sum game, if each player updates their policy simultaneously using TabularED($q$, $\ell_2$),
if $\forall s, a \in \cS \times \cA: \theta_{s,a} > 0$ and $\alpha^t = t^{-\frac{1}{2}}$,
then for each player $i$: after $T$ iterations, a policy $\bpi_i^* \in \{ \bpi_i^1, \cdots, \bpi_i^T \}$ will have been generated such that $\bpi_i^*$ is $i$'s part of a $\frac{2\epsilon}{T}$-Nash equilibrium, where $\epsilon = |\cS_i| \left( \sqrt{T} + (\sqrt{T} - \frac{1}{2}) |\cA_i| (\Delta_{u_i})^2 \right) + O(T)$, and 
$\Delta_u = \max_{z, z' \in \cZ}(u_i(z) - u_i(z'))$.
\label{thm:edq}
\end{theorem}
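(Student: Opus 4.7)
The plan is to interpret TabularED($q$, $\ell_2$) as a family of independent online projected gradient ascent procedures running in parallel at each information state of player $i$, bound the local regret via the classical Zinkevich analysis, aggregate via a CFR-BR-style decomposition to bound cumulative exploitability, and finally apply a pigeonhole argument to isolate a single iterate that forms an approximate Nash equilibrium.

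First, because the tabular parameterization satisfies $\theta_{s,a}=\pi_\theta(s,a)$, the Jacobian in the update~(\ref{eq:edq-update}) is the identity, so at each state $s$ with $\tau(s)=i$ the rule reduces exactly to $\theta^t_s=\Pi_{\ell_2}(\theta^{t-1}_s+\alpha^t\bq^{\bec{b}^t}(s))$ on the probability simplex $\Delta_s$. The gain vectors satisfy $\|\bq^{\bec{b}^t}(s)\|_2\le\sqrt{|\cA_i|}\,\Delta_{u_i}$ (since $|q_{i,\bpi}(s,a)|\le\Delta_{u_i}$), and the simplex has $\ell_2$-diameter $\sqrt{2}$. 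Zinkevich's regret bound for online projected gradient ascent with step size $\alpha^t=t^{-1/2}$ then yields a local external regret at $s$ upper bounded by $\sqrt{T}+(\sqrt{T}-\tfrac{1}{2})|\cA_i|\Delta_{u_i}^2$; summing over the $|\cS_i|$ states of player $i$ produces the leading factor in the stated $\epsilon$. The second step is to convert this aggregate local-regret bound into a bound on $\sum_{t=1}^T\delta_i(\bpi^t)$: since $\bec{b}^t_{-i}$ is always an exact best response to $\bpi^{t-1}_i$, the per-iteration exploitability equals an external regret quantity that decomposes under perfect recall into per-state regrets, exactly as in the CFR-BR accounting of~\cite{Johanson12CFRBR}. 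By symmetry (both players run ED simultaneously), the same bound holds for player $-i$, and a pigeonhole over $t\in\{1,\dots,T\}$ yields some $t^*$ with $\delta_1(\bpi^{t^*})+\delta_2(\bpi^{t^*})\le 2\epsilon/T$, giving the claimed $\tfrac{2\epsilon}{T}$-Nash property.

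The hard part will be reconciling ED's use of $q$-values with the counterfactual-value decomposition that underlies CFR-BR. Under perfect recall $q^c_{i,\bpi}(s,a)=\eta_{-i,\bpi}(s)\,q_{i,\bpi}(s,a)$, so the local gains fed into the projected ascent at $s$ are off from the CFR-BR quantities by the opponent reach factor $\eta_{-i,\bpi^t}(s)$; this rescaling threatens the clean $\sqrt{T}$ rate at information states visited rarely by the opponent's best response, and is the reason the theorem requires strict-interior initialization $\theta_{s,a}>0$. The intended resolution, following the policy-gradient-to-CFR correspondence established in~\cite{srinivasan2018actor}, is to absorb this reach-weighting mismatch and the discrepancy between behavioural-form and sequence-form regret into the additive $O(T)$ slack in $\epsilon$, so that the leading $\sqrt{T}$ dependence coming from the Zinkevich bound survives and the pigeonhole argument still extracts a meaningful approximate equilibrium.
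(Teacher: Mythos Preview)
Your proposal is correct and follows essentially the same route as the paper: bound the full external regret via the per-information-state GIGA/Zinkevich analysis (this is what the paper imports as Theorem~\ref{thm:spgpi-br} from~\cite{srinivasan2018actor}, including the $O(T)$ slack arising from the $q$-value/counterfactual-value mismatch you correctly identify in your ``hard part'' paragraph), and then exploit the fact that a best-responding opponent makes the effective loss $\ell_i(\pi_i)=-\min_{\pi_{-i}}v_{i,(\pi_i,\pi_{-i})}$ time-independent to pass from a regret bound to a best-iterate bound. The paper isolates this last step as a standalone Lemma~\ref{lm:cfr-br-opt} (for a fixed loss, the minimum-loss iterate has optimality gap at most $R^T/T$) and applies it separately for each player, whereas you phrase it as a pigeonhole on the summed exploitabilities to extract a single $t^*$; these are the same argument.
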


ED is computing best responses each round already, so it is easy to track the best iterate: it will simply
be the one with the highest expected value versus the opponent's best response. 
However, because of the $O(T)$ term, TabularED($q$, $\ell_2$) may not converge to an equilibrium,
whereas this is guaranteed when using counterfactual values $q^c$ (see below).

The reasoning behind the proof can also be applied to the original CFR-BR theorem, so we present an improved
guarantee, whereas the original CFR-BR theorem made a probabilistic guarantee.
\begin{corollary}{(Improved \cite[Theorem 4]{Johanson12CFRBR})}
\label{cor:cfr-br}
If player $i$ plays $T$ iterations of CFR-BR, then it will have generated a $\bpi^*_i \in \{ \bpi^1, \bpi^2, \cdots, \bpi^T\}$, where $\bpi^*_i$ is a $2\epsilon$-equilibrium, where $\epsilon$ is defined as in \cite[Theorem 3]{Johanson12CFRBR}.
\end{corollary}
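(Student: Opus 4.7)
The plan is to adapt the deterministic averaging argument used for Theorem~\ref{thm:edq} to the CFR-BR setting, replacing the probabilistic window-based reasoning behind \cite[Theorem 4]{Johanson12CFRBR} with a direct use of the CFR regret bound. The core observation is that whenever a learner's cumulative external regret is controlled while facing a best-response opponent, that same bound automatically controls the sum of per-iterate exploitabilities, so a good iterate can be identified deterministically rather than probabilistically.

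Concretely, I would first invoke \cite[Theorem 3]{Johanson12CFRBR} to obtain $R_i^T = \max_{\bpi_i'} \sum_{t=1}^{T}\bigl[ v_{i,(\bpi_i',\bpi_{-i}^{t})} - v_{i,(\bpi_i^{t},\bpi_{-i}^{t})}\bigr] \le \epsilon T$, where $\bpi_{-i}^{t}= b_{-i}(\bpi_i^{t})$ is the best-response opponent played by CFR-BR. Two consequences of CFR-BR's structure follow immediately: (i) $v_{i,(\bpi_i^{t},\bpi_{-i}^{t})} = \min_{\bpi_{-i}} v_{i,(\bpi_i^{t},\bpi_{-i})}$, the worst-case value of $\bpi_i^{t}$; and (ii) instantiating the comparator in the $\max$ with any minimax-optimal strategy $\bpi_i^{\dagger}$ for player $i$ lower-bounds the positive sum by $T v_i^{\star}$, where $v_i^{\star}$ is the game value. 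Substituting both facts into the regret bound yields $\sum_{t=1}^{T}\bigl[v_i^{\star} - \min_{\bpi_{-i}} v_{i,(\bpi_i^{t},\bpi_{-i})}\bigr] \le \epsilon T$, i.e.\ the sum of per-iterate exploitabilities of $\bpi_i^{t}$ is at most $\epsilon T$. By pigeonhole some iterate $\bpi_i^{t^{\star}}$ has exploitability at most $\epsilon$; setting $\bpi^{\star}_i = \bpi_i^{t^{\star}}$ and pairing it with a minimax-optimal opponent produces an $\epsilon$-Nash (hence $2\epsilon$-Nash) profile, so $\bpi^{\star}_i$ is player $i$'s part of a $2\epsilon$-equilibrium as required. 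Because CFR-BR already generates $\bpi_{-i}^{t}$ at every step, $t^{\star}$ is identifiable online as the iterate maximizing $v_{i,(\bpi_i^{t},\bpi_{-i}^{t})}$.

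The main obstacle is the minimax step linking CFR's external regret---whose comparator ranges over all player-$i$ strategies---to a bound on the sum of worst-case gaps. It relies crucially on the asymmetry of CFR-BR: because $\bpi_{-i}^{t}$ is a best response, the loss term already equals the worst-case value of $\bpi_i^{t}$, and choosing a minimax-optimal comparator collapses the max-side of the regret to $T v_i^{\star}$. Once this reduction is in hand the remaining averaging argument is elementary, and the probabilistic window argument used in the original \cite[Theorem 4]{Johanson12CFRBR} becomes unnecessary, yielding the claimed deterministic improvement.
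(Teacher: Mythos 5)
Your proposal is correct and follows essentially the same route as the paper: you bound the sum of per-iterate worst-case gaps by the CFR regret (using that a best-responding opponent makes the realized value equal the worst-case value, and a minimax comparator collapses the max side to $T v_i^{\star}$), then select the best iterate deterministically — which is exactly the paper's combination of the CFR-BR regret bound with Lemma~\ref{lm:cfr-br-opt}, with your pigeonhole step simply inlining that lemma. Your observation that the best iterate is identifiable online as the one with highest value against the best response also matches the paper's remark following the corollary.
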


The best iterate can be tracked in the same way as ED, and the convergence is guaranteed.

\begin{remark}
When using $q$-values, the values are normalized by a quantity, $\cB_{-i}(\bpi, s)$, that depends on the opponents' policies~\cite[Section 3.2]{srinivasan2018actor}.
 % The convergence guarantee of TabularED($q$, $\ell_2$) relies on~\cite[Theorem 2]{srinivasan2018actor}, whose proof includes a division by $\cB_{-i}(\bpi, s)$~\cite[Appendix E.2]{srinivasan2018actor}.
Therefore, the regret bound is undefined when $\cB_{-i}(\bpi,s) = 0$, which can happen when an opponent no longer plays to reach $s$.\\
\label{rem:qvals}
\end{remark}

\noindent {\bf \underline{Case}: TabularED}($q^c$, $\ell_2$).
Instead of using q-values, we can implement ED with counterfactual values. In this case, TabularED with the $\ell_2$ projection becomes CFR-BR(GIGA), which provably converges to approximate equilibria and avoids the issued discussed in Remark~\ref{rem:qvals}.

\begin{theorem}
    \label{thm:qc-l2}
    Let TabularED($q^c$, $\ell_2$) be described as in Section~\ref{sec:edq} using tabular policies and the following update rule:
    \begin{align*}
        \bpi_i^t(s) = \projection_{\ell_2} \subex{ \bpi_i^{t - 1}(s) + \stepSize^t \bec{q}^{c, \bec{b}}(s) }.
    \end{align*}
    Then, the Theorem~\ref{thm:edq} statement, excluding the $O(T)$ term, also holds for TabularED($q^c$, $\ell_2$) .\\
\end{theorem}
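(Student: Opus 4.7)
The plan is to mirror the proof of Theorem~\ref{thm:edq}, with $q^c$ substituted for $q$ throughout; the key benefit is that counterfactual values already absorb the opponent's reach probabilities, removing the normalization issue flagged in Remark~\ref{rem:qvals} and hence the additive $O(T)$ slack. Concretely, I would first identify TabularED($q^c$, $\ell_2$) with CFR-BR using Zinkevich's GIGA as the per-information-state regret minimizer: the update $\bpi_i^t(s) = \Pi_{\ell_2}(\bpi_i^{t-1}(s) + \alpha^t \bq^{c,\bec{b}}(s))$ is precisely a projected gradient ascent step on the simplex with linear reward vector $\bq^{c,\bec{b}}(s)$, played against a per-round best-responding opponent $\bec{b}_{-i}^t$, which is exactly the CFR-BR template.

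Second, I would plug in GIGA's external regret bound at each information state. With step size $\alpha^t = t^{-1/2}$, simplex $\ell_2$-diameter $\sqrt{2}$, and payoff norm $\|\bq^{c,\bec{b}}(s)\|_2 \leq \sqrt{|\cA_i|}\,\Delta_{u_i}$ (since each counterfactual action value lies in $[-\Delta_{u_i}, \Delta_{u_i}]$), Zinkevich's analysis yields the per-state bound
\begin{equation*}
R_T(s) \;\leq\; \sqrt{T} + \bigl(\sqrt{T} - \tfrac{1}{2}\bigr)\,|\cA_i|\,\Delta_{u_i}^2.
\end{equation*}
I would then apply the CFR regret decomposition of Zinkevich et al.~(2008), which bounds player $i$'s total external regret by $\sum_{s \in \cS_i} R_T^+(s)$ using only the external-regret property of each local learner. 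This step goes through verbatim for GIGA and produces precisely the $\epsilon$ appearing in Theorem~\ref{thm:edq}, now without the $O(T)$ term, because the reach-probability weighting in $q^c$ makes the decomposition tight rather than requiring a lossy renormalization.

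Third, I would convert the regret bound into a best-iterate guarantee using the reasoning behind Corollary~\ref{cor:cfr-br}. Since each player's opponent plays an exact best response on every round, each player's cumulative external regret upper bounds the sum of its per-iteration exploitabilities: $\sum_t \delta_i(\bpi^t) \leq \epsilon$. Summing over both players and applying pigeonhole, some round $t^* \in \{1,\ldots,T\}$ satisfies $\delta_1(\bpi^{t^*}) + \delta_2(\bpi^{t^*}) \leq 2\epsilon/T$, so $\bpi^{t^*}$ is a $\tfrac{2\epsilon}{T}$-Nash equilibrium, matching the statement.

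The main obstacle is verifying that the CFR decomposition plugs cleanly into GIGA: the canonical CFR theorem is written for regret-matching, but its proof only uses each local learner's external-regret property against the observed counterfactual payoffs, which GIGA satisfies by Zinkevich's theorem. A secondary check is that $\bq^{c,\bec{b}}(s)$ remains well-defined even when the opponent never reaches $s$---true here because counterfactual values are simply zero in that case---so the degenerate situation that produced the $O(T)$ overhead in the $q$-value case of Remark~\ref{rem:qvals} never arises, and the bound is clean.
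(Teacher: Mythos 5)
Your proposal is correct and follows essentially the same route as the paper: identify the update with CFR-BR(GIGA), invoke GIGA's per-information-state regret bound with $\alpha^t = t^{-1/2}$, combine the local bounds via the CFR regret decomposition (which only needs each local learner's external-regret property), and convert cumulative regret against per-round best responders into a best-iterate equilibrium guarantee. The only cosmetic difference is that you use a pigeonhole argument over a single joint round, whereas the paper applies its best-iterate Lemma~\ref{lm:cfr-br-opt} to each player separately; both yield the stated $\frac{2\epsilon}{T}$ bound without the $O(T)$ term.
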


\noindent {\bf \underline{Case}: TabularED($q^c$, softmax)}
We now relate TabularED with counterfactual values and softmax policies closely to an algorithm with known finite time convergence bounds. For details, see Appendix~\ref{sec:connections}.

TabularED($q^c$, softmax) is still a policy gradient algorithm: it differentiates the policy (\ie softmax function) with respect to its parameters, and updates in the direction of higher value. With two subtle changes to the overall process, we can show that the algorithm would become CFR-BR using hedge~\cite{hedge} as a local regret minimizer. CFR with hedge is known to have a better bound, but has typically not performed as well as regret matching in practice, though it has been shown to work better when combined with pruning based on dynamic probability thresholding~\cite{Brown17Dynamic}.

Instead of policy gradient, one can use a softmax transfer function over the the sum of action values (or regrets) over time, which are the gradients of the {\it value function} with respect to the policy. Accumulating the gradients in this way, the algorithm can be recognized as Mirror Descent~\cite{Nemirovsky1983mirrorDescent}, which also coincides with hedge given the softmax transfer~\cite{Beck2003entropicDescent}. When using the counterfactual values, ED then turns into CFR-BR(hedge), which converges for the same reasons as CFR-BR(regret-matching).

% While we cannot prove the convergence of this algorithm, we use it because the policy gradient formulation allows for easily-applicable general function approximation inspired by reinforcement learning.

We do not have a finite time bound of the exploitability of TabularED($q^c$, softmax) as we do for the same algorithm with an $\ell_2$ projection or CFR-BR(hedge). But since TabularED($q^c$, softmax) is a policy gradient algorithm, its policy will be adjusted toward a local optimum upon each update and will converge at that point when the gradient is zero. We use this algorithm because the policy gradient formulation allows for easily-applicable general function approximation.

\section{Experimental Results}
%Results on several games comparing to XFP and CFR.
We now present our experimental results. We start by comparing empirical convergence rates to XFP and CFR in the tabular
setting, following by convergence behavior when training neural network functions to approximate the policy.

In our initial experiments, we found that using $q$-values led to plateaus in convergence in some cases, possibly due to numerical instability caused by the problem outlined in Remark~\ref{rem:qvals}.
Therefore, we present results only using TabularED($q^c$, softmax), which for simplicity we refer to as TabularED for the remainder of this section. We also found that the algorithm converged faster with slightly higher learning rates than the ones suggested by Section~\ref{sec:conv-iig}.
%As suggested by convergence analysis in Section~\ref{sec:conv-iig}, we use a learning rate $\alpha_t \propto \frac{1}{\sqrt{t}}$ for all exploitability descent experiments, in both the tabular and neural network cases.

\subsection{Experiment Domains}

Our experiments are run across four different imperfect information games. We provide very brief descriptions here; see Appendix~\ref{app:domains} as well as~\cite{Kuhn50,Southey05uai} and ~\cite[Chapter 3]{lanctot13phdthesis} for more detail. 

{\bf Kuhn poker} is a simplified poker game first proposed by Harold Kuhn~\cite{Kuhn50}
{\bf Leduc poker} is significantly larger game with two rounds and a 6-card deck in two suits, e.g. \{JS,QS,KS, JH,QH,KH\}.
{\bf Liar's Dice}(1,1) is dice game where each player gets a single private die, rolled at the start of the game, and players proceed to bid on the outcomes of all dice in the game.
{\bf Goofspiel} is a card game where players try to obtain point cards by bidding simultaneously. We use an imperfect information variant where bid cards are unrevealed.

\subsection{Convergence Results \label{sec:conv-results}}

We now present empirical convergence rates to $\epsilon$-Nash equilibria.
The main results are depicted in Figure~\ref{fig:all_plots_all_games}. 
\begin{figure}[h]
\centering
\begin{tabular}{@{}c@{}}
    \includegraphics[width=.49\textwidth]{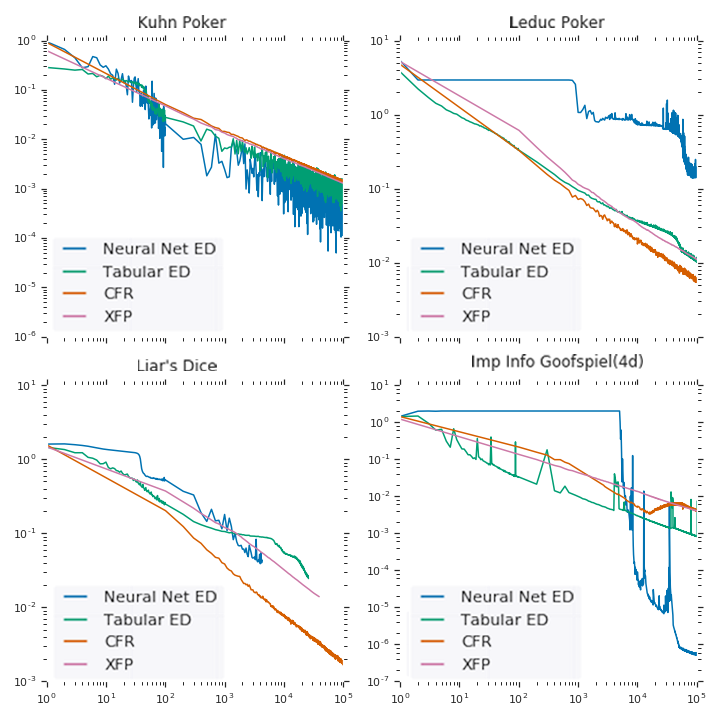} 
  \end{tabular}
  \caption{Extensive-form fictitious play (XFP), CFR, tabular and neural-net ED. The y-axis is $\textsc{NashConv}$ defined in Section~\ref{sec:background}, and the x-axis is number of iterations, both in log-scale.}
  \label{fig:all_plots_all_games}
\end{figure}

For the neural network experiments, we use a single policy network for both players, which takes as input the current state of the game and whose output is a softmax distribution over the actions of the game.
The state of the game is represented in a game-dependent fashion as a fixed-size vector of between 11 and 52 binary bits, encoding public information, private information, and the game history.

The neural network consists of a number of fully-connected hidden layers, each with the same number of units and with rectified linear activation functions after each layer. A linear output layer maps from the final hidden layer to a value per action. The values for the legal actions are selected and mapped to a policy using the softmax function.

At each step, we evaluate the policy for every state of the game, compute a best response to it, and evaluate each action against the best response. We then perform a single gradient descent step on the loss function: $- \sum_s \bpi_i(s) \cdot \left( \bq^b(s) - B(s) \right) + w_r \frac{1}{n} \sum_i \btheta_i^2$,
where the final term is a regularization for all the neural network weights, and the baseline $B(s)$ is a computed constant (i.e. it does not contribute to the gradient calculation) with $B(s) = \bpi_i(s) \cdot \bq^{\bec{b}}(s)$.
We performed a sweep over the number of hidden layers (from 1 to 5), the number of hidden units (64, 128 or 256), the regularization weight ($10^{-7}, 10^{-6}, 10^{-5}, 10^{-4}$), and the initial learning rate (powers of 2). The plotted results show the best values from this sweep for each game.

\subsection{Discussion}

There are several interesting observations to make about the results. 
First, the convergence of the neural network policies is more erratic than the tabular counterparts.
However, in two games the neural network policies have learned {\it more accurate} approximate equilibria than any of the tabular algorithms for the same number of iterations.
The network could be generalizing across the state space (discovering patterns)
in a way that is not possible in the tabular case, despite raw input features.

Although Tabular ED and XFP have roughly the same convergence rate, the respective function approximation versions have an order of magnitude difference in speed, with Neural ED reaching an exploitability of $0.08$ in Leduc Poker after $10^5$ iterations, a level which NFSP reaches after approximately $10^6$ iterations~\cite{Heinrich16}. Neural ED and NFSP are not directly comparable as NFSP is computing an approximate equilibrium using sampling and RL while ED uses true best response. However, NFSP uses a reservoir buffer dataset of 2 million entries, whereas this is not required in ED.

\section{Conclusion}

We introduce Exploitability Descent (ED) that optimizes its policy directly against worst-case opponents. In cyclical perfect information and Markov games, we prove that ED policies converge to strong policies that are unexploitable in the tabular case. In imperfect information games, we also present finite time exploitability bounds for tabular policies.
While the empirical convergence rates using tabular policies are comparable to previous algorithms,
the policies themselves provably converge. So, unlike XFP and CFR, there is no need to compute the average policy.
Neural network function approximation is applicable via direct policy gradient ascent, also avoiding domain-specific abstractions, 
or the need to store large replay buffers of past experience, as in neural fictitious self-play~\cite{Heinrich16}, or a set of past
networks, as in PSRO~\cite{Lanctot17PSRO}.

In some of our experiments, neural networks learned lower-exploitability policies than the tabular counterparts, which could be an
indication of strong generalization potential by recognizing similar patterns across states.
There are interesting directions for future work: using approximate best responses and sampling trajectories for the policy
optimization in larger games where enumerating the trajectories is not feasible.

\section*{Acknowledgments}

We would like to thank Neil Burch, Johannes Heinrich, and Martin Schmid for feedback. Dustin Morrill was supported by The Alberta Machine Intelligence Institute (Amii) and Alberta Treasury Branch (ATB) during the course of this research.

\bibliographystyle{named}
\bibliography{paper}
 
% \newpage
% \vspace{1cm}
% \newpage

\vspace{0.2cm}
\appendix
{\Large {\bf Appendices}}

\section{Proofs}
\label{app:proofs}

\subsection{Cyclical Perfect Information Games and Markov Games}

\begin{proof}[Proof of Theorem~\ref{thm:pg-worst-case}]
    The proof uses tools from the non-smooth analysis to properly handle gradients of a non-smooth function. We use the notion of generalized gradients defined in~\cite{clarke1975generalized}. The generalized gradient of a Lipschitz function is the convex hull of the limits of the form $\lim \nabla f(x + h_i)$ where $h_i \rightarrow 0$. The only assumption we will require is that the parameters of our policy $\theta_i$ remains in a compact set $\Theta_i$ and that $v_{i,(\pi_{\theta_i}, \pi_{-i})}$ is differentiable with respect to $\theta_i$ for all $\pi_{-i}$.
    
    More precisely we use~\cite[Theorem 2.1]{clarke1975generalized} to state our result. The theorem requires the function to be uniformly semi-continuous which is the case if the policy is differentiable since the dependence of $v_{i,(\pi_{\theta_i}, \pi_{-i})}$ on $\pi_{-i}$ is polynomial. The function $v_{i,(\pi_{\theta_i}, \pi_{-i})}$ is Lipschitz with respect to $(\theta_i, \pi_{-i})$. The uniform continuity of $\grad_{\theta_i} v_{i,(\pi_{\theta_i}, \pi_{-i})}$ comes from the fact that $\Theta_i$ is compact.
    
    Using~\cite[Theorem 2.1]{clarke1975generalized}, we have that $\subdiff \min_{\pi_{-i}} v_{i,(\pi_{\theta_i}, \pi_{-i})}$ is the convex hull of 
    $
    \{ \grad_{\theta_i} v_{i,\pi_{\theta_i}, \beta} | \beta \in \BrSet{\pi_{\theta_i}} ) \}
    $, so 
    $
    \grad_{\theta_i} v_{i,\pi_{\theta_i}, \beta} \in \subdiff \min_{\pi_{-i}} v_{i,(\pi_{\theta_i}, \pi_{-i})}
    $.
    
    The proof follows by applying the policy gradient theorem~\cite{baxter2001infinite} to $\grad_{\theta_i} v_{i,\pi_{\theta_i}, \beta}$.
\end{proof}

\begin{proof}[Proof of Lemma~\ref{lem:sim-pg-min-nashconv}]
    In a two-player, zero-sum game, $\NashConv$ reduces to the sum of exploitabilities:
    \begin{align}
        & \NashConv(\pi_{\theta_i}, \pi_{\theta_{-i}}) = \sum_i \max_{\pi_{i}'} v_{i,(\pi_i', \pi_{\theta_{-i}})} - v_{i,\bm\pi}\nonumber\\
        &= \sum_i \max_{\pi_{i}'} v_{i,(\pi_i', \pi_{\theta_{-i}})} = \sum_i -\min_{\pi_{i}'} -v_{i,(\pi_i', \pi_{\theta_{-i}})} \nonumber\\
        &= -(\min_{\pi_{1}'} v_{2,(\pi_{\theta_{2}}, \pi_1')} + \min_{\pi_{2}'} v_{1,(\pi_{\theta_{1}}, \pi_2')})\nonumber
    \end{align}
    so doing policy gradient in the worst case independently for all players locally minimizes the sum of exploitabilities and therefore $\NashConv$. Formally we have\footnote{Usually one would have $\subdiff_{\bm{\theta}}(f_1 + f_2) \subset \subdiff_{\bm{\theta}} f_1 + \subdiff_{\bm{\theta}} f_2$. But since in our case the functions are defined on two different sets of parameters, we have an equality.}:
    
    \begin{align}
        &\subdiff_{\bm{\theta}} \sum_i \max_{\pi'_{i}} v_{i,(\pi^{'i}, \pi^{-i}_{\theta_i})} - v_{i,\bm\pi}\\ \nonumber
        &= - \left(\subdiff_{\theta_2} \min_{\pi_{1}'} v_{2,(\pi_{\theta_{2}}, \pi_1')} + \subdiff_{\theta_2} \min_{\pi_{2}'} v_{1,(\pi_{\theta_{1}}, \pi_2')}\right)\nonumber.
    \end{align}
\end{proof}

\subsection{Imperfect Information Games}

Before we prove the main proof (of Theorem~\ref{thm:edq}), we start with some definitions, a previous theorem, and a supporting lemma.

\begin{definition}
Recall $\Delta_s$ and $\Pi_{\ell_2}$ from Section~\ref{sec:edq}.
Define \defword{strong policy gradient policy iteration against specific opponent(s)}, SPGPI$(\bpi_{-i})$, to be the process where player $i$ updates their policy using state-localized gradient ascent while all other players use $\bpi_{-i}$, according to Eq~\ref{eq:edq-update}.
\label{def:spgpi-sep}
\end{definition}
This is a form of strong policy gradient policy iteration (SPGPI) defined in~\cite[Theorem 2]{srinivasan2018actor} that separates the optimization for each player. Also notice that an iteration of TabularED($q$, $\ell_2$) is equivalent to $n$
simultaneous applications of SPGPI$(b_{-i}(\bpi_i))$.
\begin{definition}
Suppose all players use a sequence of joint policies $\{\bpi^1, \cdots, \bpi^T\}$ over $T$ iterations.
Define player $i$'s \defword{regret} after $T$ iterations to be the difference in expected utility between the best possible policy in hindsight and the expected utility given the sequence of policies:
\[ R^T_i = \max_{\bpi_i} \sum_{t=1}^T \left( v_{i,(\bpi_i, \bpi^t_{-i})} - v_{i,\bpi^t} \right). \]
\end{definition}

\begin{theorem}{\cite[Theorem 2]{srinivasan2018actor}}
Suppose players play a finite game using joint policies $\{\bpi^1, \cdots, \bpi^T\}$ over $T$ iterations.
In a two-player zero-sum game, if $\forall s, a \in \cS \times \cA: \theta_{s,a} > 0$ and $\alpha^t = t^{-\frac{1}{2}}$, then the regret of SPGPI$(\bpi_{-i})$ after $T$ iterations is 
$R^T_i \le \epsilon$, where $\epsilon = |\cS_i| \left( \sqrt{T} + (\sqrt{T} - \frac{1}{2}) |\cA_i| (\Delta_{u_i})^2 \right) + O(T)$, and 
$\Delta_u = \max_{z, z' \in \cZ}(u_i(z) - u_i(z'))$.
\label{thm:spgpi-br}
\end{theorem}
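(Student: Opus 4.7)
The plan is to prove this regret bound in three stages: (i) apply a standard online gradient descent (OGD) regret bound at each individual information state, (ii) sum the per-state bounds over all $s \in \cS_i$, and (iii) relate the resulting sum of per-state regrets to the global regret $R_i^T$, where the use of Q-values rather than counterfactual values introduces the $O(T)$ slack.

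In stage (i), I would fix a state $s$ and interpret the SPGPI update $\btheta_s^t = \Pi_{\ell_2}(\btheta_s^{t-1} + \alpha^t \bq^{\bec{b}^t}(s))$ as projected online gradient ascent on the simplex $\Delta_s$ with linear gains $\langle \pi, \bq^{\bec{b}^t}(s)\rangle$. The squared $\ell_2$-diameter of $\Delta_s$ is at most $2$ and $\|\bq^{\bec{b}^t}(s)\|_2^2 \le |\cA_i|(\Delta_{u_i})^2$ because each Q-value lies in an interval of width $\Delta_{u_i}$. Plugging $\alpha^t = t^{-1/2}$ into Zinkevich's GIGA analysis and using $\sum_{t=1}^T t^{-1/2} \le 2\sqrt{T}-1$ yields a per-state regret bound of $\sqrt{T} + (\sqrt{T} - \tfrac{1}{2})|\cA_i|(\Delta_{u_i})^2$. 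The positivity hypothesis $\theta_{s,a} > 0$ is used here to guarantee that the opponent reach $\cB_{-i}(\bpi^t, s)$ stays positive so that Q-values remain well-defined throughout training, sidestepping the pathology of Remark~\ref{rem:qvals}.

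In stage (ii), summing the per-state bound over the $|\cS_i|$ states of player $i$ directly produces the leading $|\cS_i|\left(\sqrt{T} + (\sqrt{T}-\tfrac{1}{2})|\cA_i|(\Delta_{u_i})^2\right)$ factor. For stage (iii), I would decompose $v_{i,(\bpi_i, \bpi^t_{-i})} - v_{i,\bpi^t}$ via the standard perfect-recall game-tree decomposition into a sum over $s \in \cS_i$ of inner products against $\bq^{\bec{b}^t}(s)$, weighted by the opponent reach $\cB_{-i}(\bpi^t, s)$. Because SPGPI uses un-reach-weighted Q-values, the per-state regret bounds from stage (i) do not line up exactly with this weighted decomposition; bounding the per-iteration discrepancy between the uniform per-state weighting and the reach-weighted contribution to $R_i^T$ by a constant depending only on $\Delta_{u_i}$ and $|\cA_i|$ supplies the additive $O(T)$ residual.

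The main obstacle is stage (iii). In CFR the analogous decomposition telescopes exactly because counterfactual values are defined to carry precisely the opponent-reach weighting required by the game-tree identity, so the sum of per-state regrets is exactly the global regret. With plain Q-values the normalization by $\cB_{-i}(\bpi^t, s)$ creates a weighting mismatch that cannot be absorbed into the $\sqrt{T}$ terms, producing an unavoidable linear-in-$T$ slack. This is exactly why the companion analysis of TabularED($q^c$, $\ell_2$) (Theorem~\ref{thm:qc-l2}) eliminates the $O(T)$ term: replacing $\bq^{\bec{b}^t}$ by the corresponding counterfactual $\bec{q}^{c, \bec{b}^t}$ makes the per-state regret decomposition match the global regret identically, so only the $\sqrt{T}$ contributions survive.
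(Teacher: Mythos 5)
This statement is not proved in the paper at all: it is imported verbatim as \cite[Theorem 2]{srinivasan2018actor}, and the only argument the paper adds is the remark immediately following it, namely that the original proof bounds player $i$'s regret as a function of an arbitrary fixed sequence of opponent policies, so the bound continues to hold when the opponents happen to play best responses. There is therefore no in-paper proof to compare against; you are reconstructing the proof of the cited external result. On its own merits, your stages (i) and (ii) are the standard route and account correctly for the shape of the bound: the per-state quantity $\sqrt{T} + (\sqrt{T}-\tfrac{1}{2})|\cA_i|(\Delta_{u_i})^2$ is exactly Zinkevich's projected-gradient regret with simplex diameter $\sqrt{2}$ and gradient norm at most $\sqrt{|\cA_i|}\,\Delta_{u_i}$, and the $|\cS_i|$ factor comes from summing local regrets over information states as in the CFR decomposition.

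The gap is in stage (iii), which is where all the content lives and which you leave as an assertion. You claim the mismatch between the unweighted per-state OGD regrets and the reach-weighted decomposition of $R_i^T$ can be bounded by a per-iteration constant, but you do not exhibit such a bound, and it is not obvious: the discrepancy involves the normalizer $\cB_{-i}(\bpi^t,s)$, which can be arbitrarily small or zero, precisely the pathology flagged in Remark~\ref{rem:qvals}. (The statement is rescued only because an additive $O(T)$ is vacuously true for any bounded-payoff game --- per-round regret never exceeds $\Delta_{u_i}$ --- which is consistent with Appendix~\ref{app:errata} explaining that the $O(T)$ term was inserted as a correction after the original, stronger claim was found to be wrong. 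If you invoke that triviality, say so explicitly; your current phrasing suggests a nontrivial constant-per-round bound that you have not established.) Separately, your reading of the hypothesis $\theta_{s,a}>0$ as guaranteeing $\cB_{-i}(\bpi^t,s)>0$ is doubtful: in the ED instantiation the opponent is a best response, which is typically deterministic and need not have full support, and the paper's Remark~\ref{rem:qvals} explicitly notes that the bound becomes undefined in exactly that case. The positivity condition concerns the tabular parameters of the policy being optimized, not the opponent's reach.
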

Note that, despite the original application of policy gradients in self-play, it follows from the original proof that
the statement about the regret incurred by player $i$ does not require a specific algorithm generate the opponents' policies: it is only a function of the specific sequence of opponent policies.
In particular, they could be best response policies, and so SPGPI$(b_{-i}(\bpi_i))$ has the same regret guarantee.

% We need one more lemma before we prove the convergence guarantee of Tabular ED.
The following lemma states an optimality bound of the best iterate under time-independent loss functions equal to the average regret. The best strategy in a no-regret sequence of strategies then approaches an equilibrium strategy over time without averaging and without probabilistic arguments.

\begin{lemma}\label{lm:cfr-br-opt}
Denote $[[T]] = \{ 1, 2, \cdots, T \}$. 
For any sequence of $T$ iterates, $\{ \btheta^t \in \Theta \}_{t \in [[T]]}$, from decision set $\Theta$, the regret of this sequence under loss, $\ell(\btheta) : \Theta \rightarrow \Re$, is 
\[
R^T = \sum_{t = 1}^T \ell(\btheta^t) - \inf_{\btheta^* \in \Theta} \sum_{t=1}^T \ell(\btheta^*).
\]
Then, the iterates with the lowest loss, $\hat{\btheta} = \argmin_{t \in [[T]]} \ell(\btheta^t)$,
has an optimality gap bounded by the average regret:
\[
\ell(\hat{\btheta}) - \inf_{\btheta^* \in \Theta} \ell(\btheta^*) \le \frac{R^T}{T}.
\]
\end{lemma}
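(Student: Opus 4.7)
The plan is to exploit two simple facts in sequence: first, the minimum of a finite collection of numbers is bounded above by their average; second, because the loss $\ell$ is time-independent, the infimum of the cumulative loss of any fixed $\btheta^* \in \Theta$ factors out a $T$.

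First I would note that by the definition of $\hat{\btheta}$ as the loss-minimizing iterate,
\begin{align*}
\ell(\hat{\btheta}) \;=\; \min_{t \in [[T]]} \ell(\btheta^t) \;\le\; \frac{1}{T} \sum_{t=1}^T \ell(\btheta^t).
\end{align*}
Next I would rewrite the comparator term. Since $\ell$ does not depend on $t$, for any fixed $\btheta^* \in \Theta$ we have $\sum_{t=1}^T \ell(\btheta^*) = T\,\ell(\btheta^*)$, and therefore
\begin{align*}
\inf_{\btheta^* \in \Theta} \sum_{t=1}^T \ell(\btheta^*) \;=\; T \inf_{\btheta^* \in \Theta} \ell(\btheta^*).
\end{align*}

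Combining these two observations, subtracting $\inf_{\btheta^*} \ell(\btheta^*)$ from both sides of the first inequality and plugging in the factored form of the comparator gives
\begin{align*}
\ell(\hat{\btheta}) - \inf_{\btheta^* \in \Theta} \ell(\btheta^*)
&\le \frac{1}{T}\sum_{t=1}^T \ell(\btheta^t) - \inf_{\btheta^* \in \Theta} \ell(\btheta^*) \\
&= \frac{1}{T}\left(\sum_{t=1}^T \ell(\btheta^t) - \inf_{\btheta^* \in \Theta} \sum_{t=1}^T \ell(\btheta^*)\right)
 \;=\; \frac{R^T}{T},
\end{align*}
which is the claimed bound.

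There is no real obstacle here: the lemma is essentially a bookkeeping observation, and the only subtlety to highlight is that the comparator in $R^T$ is a single point in $\Theta$ held constant over all $t$, which is what allows the infimum to commute with the sum and produce the factor $T$ needed to convert the total regret into an average.
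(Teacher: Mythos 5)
Your proof is correct and uses the same two ingredients as the paper's own argument—factoring the comparator term as $T\inf_{\btheta^*}\ell(\btheta^*)$ since $\ell$ is time-independent, and bounding the minimum iterate loss by the average—merely rearranged (you start from $\min_t \ell(\btheta^t) \le \frac{1}{T}\sum_t \ell(\btheta^t)$, the paper lower-bounds $R^T$ by $T(\min_t \ell(\btheta^t) - \inf_{\btheta^*}\ell(\btheta^*))$ and divides by $T$). No gap; this is essentially the paper's proof.
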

\begin{proof}
Since $\ell$ is fixed (not varying with $t$),
\begin{eqnarray*}
R^T & =   & \sum_{t=1}^T \ell(\btheta^t) - T \inf_{\btheta^* \in \Theta} \ell(\btheta^*)\\
    & \ge & T ( \min_{t \in [[T]]} \ell(\btheta^t) - \inf_{\btheta^* \in \Theta} \ell(\btheta^*) )\\
    & =   & T ( \ell(\hat{\btheta}) - \inf_{\btheta^* \in \Theta} \ell(\btheta^*) ),
\end{eqnarray*}
and dividing by $T$ yields the result.
\end{proof}

In finite games, we can replace the $\inf$ operation in Lemma~\ref{lm:cfr-br-opt} with $\min$ when the decision set is the set of all possible strategies, since this set is closed.

We now provide the proof of the main theorem.

\begin{proof}[Proof of Theorem~\ref{thm:edq}]
The first part of the proof follows the logic of the CFR-BR proofs~\cite{Johanson12CFRBR}.
Unlike CFR-BR, we then use Lemma~\ref{lm:cfr-br-opt} to bound the quality of the best iterate.

SPGPI($b_{-i}(\pi_i)$), has regret for player $i$ as described in Theorem~\ref{thm:spgpi-br}.
Define loss function, $\ell_i(\bpi_i) = -\min_{\bpi_{-i}} v_{i, (\bpi_i, \bpi_{-i})}$, as the negated worst-case value for player $i$, like that described by~\cite[Theorem 4]{Waugh14Unified}. Then by Lemma~\ref{lm:cfr-br-opt} and Theorem~\ref{thm:spgpi-br} we have, for the best iterate:
\begin{eqnarray}
\ell_i(\pi^*) - \min_{\pi_i'} \ell_i(\pi_i', \pi_{-i}) & \le & \frac{R^T}{T}\\
                                                       & \le & \frac{\epsilon}{T} + O(1),\\
\end{eqnarray}
so $\delta_i(\bpi^*) \le \frac{\epsilon}{T} + O(1)$, 
where $\delta_i(\bpi)$ is the exploitability defined in Section~\ref{sec:background}.
The Nash equilibrium approximation bound is just the sum of the exploitabilities.
\end{proof}

The $O(1)$ term here comes from the $O(T)$ term in Theorem~\ref{thm:spgpi-br}. When counterfactual values are used instead of $q$-values, the bound in Theorem~\ref{thm:spgpi-br} does not contain the $O(T)$ term and hence there is no $O(1)$ term.
As a result Lemma~\ref{lm:cfr-br-opt} can be used to obtain Corollary~\ref{cor:cfr-br}.

\begin{proof}[Proof of Theorem~\ref{thm:qc-l2}]
This update rule is identical to that of generalized infinitesimal gradient ascent (GIGA)~\cite{Zinkevich03Online} at each information state with best response counterfactual values. CFR-BR(GIGA) therefore performs the same updates and the two algorithms coincide.
With step sizes $\alpha^t = t^{-\frac{1}{2}}, 0 < t \leq T$, each local GIGA instance has regret after $T$ iterations upper bounded by $\sqrt{T} + (\sqrt{T} - \frac{1}{2}) |\cA_i| (\Delta_{u_i})^2$,
where $\Delta_u = \max_{z, z' \in \cZ}(u_i(z) - u_i(z'))$.
Then, by the CFR Theorem~\cite{CFR}, the regret is bounded by
\begin{align*}
R^T_i \le |\cS_i| \left( \sqrt{T} + (\sqrt{T} - \frac{1}{2}) |\cA_i| (\Delta_{u_i})^2 \right).
\end{align*}
Specifically, the $O(T)$ term is missing since counterfactual values are used instead of $q$-values, and the
bound is instead inherited from the CFR theorem.
\end{proof}

\section{Longer Descriptions of Experiment Domains}
\label{app:domains}

\begin{description}
\item[Kuhn poker] is a simplified poker game first proposed by Harold Kuhn~\cite{Kuhn50}. Each player antes a single chip, and gets a single private card from a totally-ordered 3-card deck, e.g.$\{J, Q, K\}$. There is a single betting round limited to one raise of 1 chip, and two actions: pass (check/fold) or bet (raise/call).
If a player folds, they lose their commitment (2 if the player made a bet, otherwise 1). If neither player folds, the player with the higher card wins the pot (2, 4, or 6 chips). The utility for each player is defined as the number of chips after playing minus the number of chips before playing.

\item[Leduc poker] is significantly larger game with two rounds and a 6-card deck in two suits, e.g. \{JS,QS,KS, JH,QH,KH\}. Like Kuhn, each player initially antes a single chip to play and obtains a single private card and there are three actions: fold, call, raise. There is a fixed bet amount of 2 chips in the first round and 4 chips in the second round, and a limit of two raises per round. After the first round, a single public card is revealed. A pair is the best hand, otherwise hands are ordered by their high card (suit is irrelevant). Utilities are defined similarly to Kuhn poker.

\item[Liar's Dice(1,1)] is dice game where each player gets a single private die in $\{ \epsdice{1}, \epsdice{2}, \cdots, \epsdice{6} \}$, 
rolled at the beginning of the game. The players then take turns
bidding on the outcomes of both dice, i.e. with bids of the form $q$-$f$ referring to quantity and face, or calling ``Liar''. The bids represent a claim that
there are at least $q$ dice with face value $f$ among both players. The highest die value, $\epsdice{6}$, counts as a wild card matching any
value. Calling ``Liar'' ends the game, then both players reveal their dice. If the last bid is not satisfied, then the player who called ``Liar'' wins.
Otherwise, the other player wins. The winner receives +1 and loser -1.

\item[Goofspiel] or the Game of Pure Strategy, is a bidding card game where players are trying to obtain the most points.
shuffled and set face-down. Each turn, the top point card is revealed, and players simultaneously play a bid card; the point
card is given to the highest bidder or discarded if the bids are equal. In this implementation, we use a fixed deck of decreasing
points. In this paper, we use $K=4$ and an imperfect information variant where players are only told whether they have won or
lost the bid, but not what the other player played.
\end{description}

\section{Connections and Differences Between Gradient Descent, Mirror Descent, Policy Gradient, and Hedge \label{sec:connections}}

There is a broad class of algorithms that attempt to make incremental progress on an optimization task by moving parameters in the direction of a gradient. This elementary idea is intuitive, requiring only basic knowledge of calculus and functions to understand in abstract. One way to more formally justify this procedure comes from the field of online convex optimization~\cite{Zinkevich03Online,Hazan15OCO,Shwartz2012oco}. The linearization trick~\cite{Shwartz2012oco} reveals how simple parameter adjustments based on gradients can be used to optimize complicated non-linear functions. Perhaps the most well known learning rule is that of \defword{gradient descent}: $\bec{\prediction} \leftarrow \bec{\prediction} - \stepSize \grad_{\bec{\prediction}} f(\bec{\prediction})$, where the goal is to minimize function $f$.

Often problems will include constraints on $\bec{\prediction}$, such as the probability simplex constraint required of decision policies. One often convenient way to approach this problem is transform unconstrained parameters, $\bec{\gaVariable}$, to the nearest point in the feasible set, $\bec{\prediction} \as \projection(\bec{\gaVariable})$, with a transfer function, $\projection$. This separation between the unconstrained and constrained space produces some ambiguity in the way optimization algorithms are adapted to handle constraints. Do we adjust the transformed parameters or the unconstrained parameters with the gradient? And do we take the gradient with respect to the transformed parameters or the unconstrained parameters?

\defword{Projected gradient descent (PGD)}~\cite{Zinkevich03Online} resolves these ambiguities by adjusting the transformed parameters with the gradient of the transformed parameters. For PGD, the unconstrained parameters are not saved, they are only produced temporarily before they can be projected into the feasible set, $\bec{\prediction} \leftarrow \projection(\bec{\prediction} - \stepSize \grad_{\bec{\prediction}} f(\bec{\prediction}))$. \defword{Mirror descent (MD)}~\cite{Nemirovsky1983mirrorDescent,Beck2003entropicDescent}, broadly, makes adjustments exclusively in the unconstrained space, and transfers to the feasible set on-demand. However, like PGD, MD uses the gradient with respect to the transformed parameters. E.g. A MD-based update is $\bec{\gaVariable} \leftarrow \bec{\gaVariable} - \stepSize \grad_{\bec{\prediction}} f(\bec{\prediction})$, and transfer is done on-demand, $\bec{\prediction} \as \projection(\bec{\gaVariable})$.

Further difficulties are encountered when function approximation is involved, that is, when $\bec{\prediction} \as \projection(g(\bec{\gaVariable}))$, for an arbitrary function $g$. Now PGD's approach of making adjustments in the decision space where $\bec{\prediction}$ resides is untenable because the function parameters, $\bec{\gaVariable}$, might reside in a very different space. E.g. $\bec{\prediction}$ may be a complete strategy while $\bec{\gaVariable}$ may be a vector of neural network parameters with many fewer dimensions. But the gradient with respect to the $\bec{\prediction}$ is also in the decision space, so MD's update cannot be done exactly either.

A simple fix is to apply the chain rule to find the gradient with respect to $\bec{\gaVariable}$. This is the approach taken by \defword{policy gradient (PG)} methods~\cite{williams1992simple,Sutton00policy,Sutton18} (the ``all actions'' versions rather than sample action versions). A consequence of this choice, however, is that PG updates in the tabular setting (when $g$ is the identity function) generally do not reproduce MD updates.

E.g. \defword{hedge}, \defword{exponentially weighted experts}, or \defword{entropic mirror descent}~\cite{hedge,Beck2003entropicDescent}, is a celebrated algorithm for approximately solving games. It is a simple no-regret algorithm that achieves the optimal regret dependence on the number of actions, $\log \numActions$, and it can be used in the CFR and CFR-BR framework instead of the more conventional regret-matching to solve sequential imperfect information games. It is also an instance of MD, which we show here.

Hedge accumulates values associated with each action (e.g. counterfactual values or regrets) and transfers them into the probability simplex with the softmax function to generate a policy. Formally, given a sequence of values $[\bec{\expertValue}^t]_{t=1}^T, \bec{\expertValue}^t \as [\expertValue_{\action}^t]_{\action \in \Actions}$ and temperature, $\temperature > 0$, hedge plays
\begin{align*}
  \policy_{\text{hedge}}^T
    &= \frac{
        \e{\frac{1}{\temperature}\sum_{t=1}^T \bec{\expertValue}^t}
      }{
        \sum_{\action \in \Actions} \e{\frac{1}{\temperature}\sum_{t=1}^T \expertValue^t_{\action}}
      }.
\end{align*}

We now show how to recover this policy creation rule with MD.

Given a vector of bounded action values, $\bec{\expertValue} \in \expertValueSet^{\numActions} \subseteq \reals^{\numActions}, \max \expertValueSet - \min \expertValueSet \leq \utilityRange$, the expected value of policy $\bec{\policy}_{\text{MD}}$ interpreted as a probability distribution is just the weighted sum of values, $\eExpertValue \as \sum_{\action \in \Actions} \policy_{\text{MD}}(\action) \expertValue_{\action}$.

The gradient of the expected value of $\bec{\policy}_{\text{MD}}$'s value is then just the vector of action values, $\bec{\expertValue}$. MD accumulates the gradients on each round,
\begin{align*}
    \bec{\gaVariable}^{t} 
        &\as \bec{\gaVariable}^{t - 1} + \stepSize \grad_{\bec{\policy}^{t - 1}} \eExpertValue^{t}\\
        &= \bec{\gaVariable}^{t - 1} + \stepSize \bec{\expertValue}^{t}\\
        &= \bec{\gaVariable}^{0} + \stepSize \sum_{i = 1}^{t} \bec{\expertValue}^{i},
\end{align*}
where $\stepSize > 0$ is a step-size parameter. If $\bec{\gaVariable}^{0}$ is zero, then the current parameters, $\bec{\gaVariable}^{t}$, are simply the step-size weighted sum of the action values.

If $\bec{\policy}_{\text{MD}}^t$ on each round, $t \geq 0$ is chosen to be $\bec{\policy}_{\text{MD}}^t = \softmaxTransfer(\bec{\gaVariable}^t)$, then we can rewrite this policy in terms of the action values alone:
\begin{align*}
    \bec{\policy}_{\text{MD}}^t
        &= \frac{
        \e{\stepSize \sum_{t=1}^T \bec{\expertValue}^t}
      }{
        \sum_{\action \in \Actions} \e{\stepSize \sum_{t=1}^T \expertValue^t_{\action}}
      },
\end{align*}
which one can recognize as hedge with $\temperature \as 1 / \stepSize$.
% DM: This is the step size I get from Freund and Schapire, but it must be wrong because it's larger for larger T. The simplest good step size I know is sqrt(2 \log \numActions / T), but it's in an unpublished textbook and the entropic mirror descent paper gives a decreasing step size, which would complicate the analysis. I'm thinking now that maybe it's not necessary anyway.
% Choosing $\stepSize \as \dfrac{1}{1 + \sqrt{\dfrac{2 \log \numActions}{T}}}$~\cite{hedge}, where $T$ is the number of learning iterations ensures that this procedure has an $\BigO{\sqrt{T}}$ regret bound.
This shows how hedge fits into the ED framework. When counterfactual values are used for action values, ED with MD gradient-updates and a softmax transfer at every information state is identical to CFR-BR with hedge at every information state.

In comparison, PG, using the same transfer and tabular parameterization, generates policies according to
\begin{align*}
    \bec{\gaVariable}^t 
        &\as \bec{\gaVariable}^{t - 1} + \stepSize \ip{\grad_{\bec{\gaVariable}^{t - 1}} \bec{\policy}_{\text{PG}}^{t - 1}}{\bec{\expertValue}^t}\\
    \policy_{\text{PG}}^t
        &\as \softmaxTransfer(\bec{\gaVariable}^t).
\end{align*}
$\dfrac{\partial \policy(\action')}{\partial \gaVariable_{\action}} = \policy(\action') \subblock{ \indicator{\action' = \action} - \policy(\action) }$~\cite[Section 2.8]{Sutton18}, so the update direction, $\ip{\grad_{\bec{\gaVariable}} \bec{\policy}}{\bec{\expertValue}}$, is actually the regret scaled by $\bec{\policy}$:
\begin{align*}
    \frac{\partial \bec{\policy}}{\partial \gaVariable_{\action}} \cdot \bec{\expertValue}
        &= \sum_{\action' \in \Actions}
            \frac{\partial \policy(\action')}{\partial \gaVariable_{\action}} \expertValue_{\action'}\\
        &= \sum_{\action' \in \Actions}
            \policy(\action') \subblock{ \indicator{\action' = \action} - \policy(\action) } \expertValue_{\action'}\\
        &= \policy(\action) \subblock{ 1 - \policy(\action) } \expertValue_{\action}
            - \sum_{\action' \in \Actions, \action' \neq \action}
                \policy(\action') \policy(\action) \expertValue_{\action'}\\
        &= \policy(\action) \subblock{
            \subblock{ 1 - \policy(\action) } \expertValue_{\action}
            - \sum_{\action' \in \Actions, \action' \neq \action}
                \policy(\action') \expertValue_{\action'}}\\
        &= \policy(\action) \subblock{
            \expertValue_{\action} - \policy(\action) \expertValue_{\action}
            - \sum_{\action' \in \Actions, \action' \neq \action}
                \policy(\action') \expertValue_{\action'}}\\
        &= \policy(\action) \subblock{
            \expertValue_{\action} 
            - \sum_{\action' \in \Actions} \policy(\action') \expertValue_{\action'}}.
\end{align*}
Knowing this, we can write the $\bec{\policy}_{\text{PG}}^t$ in concrete terms:
\begin{align*}
    \gaVariable^t_{\action} 
        &\as \gaVariable^{t - 1}_{\action} + \stepSize \policy_{\text{PG}}^{t - 1}(\action) \subblock{
            \expertValue_{\action}
            - \sum_{\action' \in \Actions} \policy_{\text{PG}}^{t - 1}(\action') \expertValue_{\action'}
        }\\
    \bec{\policy}_{\text{PG}}^t
        &\as \softmaxTransfer(\bec{\gaVariable}^t).
\end{align*}
The fact that the PG parameters accumulate regret instead of action value is inconsequential because the difference between action values and regrets is a shift that is shared between each action, and the softmax function is shift-invariant. But there is a substantive difference in that updates are scaled by the current policy.

\section{Global Minimum Conditions}
\label{GMC}
In this section we will suppose that the policy $\pi_i(s,a) = \theta_{s,a}$ under the simplex constraints $\forall s,a, \; \theta_{s,a} - \psi_{s,a}^2 = 0$ and $\forall s, \; \sum_a \theta_{s,a} = 1$ (where $psi$ is a slack variable to enforce the inequality constrain $\forall s,a, \; \theta_{s,a} \geq 0$)

\begin{equation*}
\begin{aligned}
& \underset{x}{\text{maximize}}&\min_{\pi_{-i}} v_{i,(\pi_{\theta_i}, \pi_{-i})} \\
& \text{subject to}& \sum_a \theta_{s,a} = 1, \; \forall s\\
& & \theta_{s,a} - \psi_{s,a}^2 = 0, \; \forall s,a\\
\end{aligned}
\end{equation*}

The Lagrangian is:
\begin{align*}
L(\theta, \psi, \lambda) =& \min_{\pi_{-i}} v_{i,(\pi_{\theta_i}, \pi_{-i})} - \sum \limits_{s\in S^i}\lambda_s(\sum_a \theta_{s,a}- 1)\\
&\;\; - \sum \limits_{s\in S^i, a}\lambda_{s, a}(\theta_{s,a} - \psi_{s,a}^2)
\end{align*}
Knowing that for all $br_{-i}$ :
$$\frac{\partial v_{i,(\pi_{\theta_i}, br_{-i})}}{\theta_{s,a}} = \eta_{(\pi_{\theta_i}, br_{-i})}(s) q_{i, (\pi_{\theta_i}, br_{-i})}(s, a)$$
The gradient of the Lagrangian with respect to $\theta, \psi, \lambda$ is:

\begin{align}
&\left(\eta_{(\pi_{\theta_i}, br_{-i})}(s) q_{i, (\pi_{\theta_i}, br_{-i})}(s, a) -(\lambda_{s} + \lambda_{s,a})\right)_{s,a}\nonumber\\
&\qquad\qquad\qquad\qquad\qquad\qquad\qquad\in \subdiff_{\theta_i} L(\theta, \psi, \lambda)\\
&\frac{\partial L(\theta, \psi, \lambda)}{\partial \lambda_{s}} = - (\sum_a \theta_{s,a} - 1)\\
&\frac{\partial L(\theta, \psi, \lambda)}{\partial \lambda_{s,a}} = - (\theta_{s,a} - \psi_{s,a}^2)\\
&\frac{\partial L(\theta, \psi, \lambda)}{\partial \psi_{s,a}} = 2 \psi_{s,a} \lambda_{s,a} \\
\end{align}

Suppose that there exists a best response $br_{-i}$ such that $\frac{\partial L(\theta, \psi, \lambda)}{\partial \theta_{s,a}} = \frac{\partial L(\theta, \psi, \lambda)}{\partial \lambda_{s}} = \frac{\partial L(\theta, \psi, \lambda)}{\partial \lambda_{s,a}} = \frac{\partial L(\theta, \psi, \lambda)}{\partial \psi_{s,a}} = 0$ (\textit{i.e.} if 0 is in the set of generalized gradients). Two cases can appear:

-If $\theta_{s,a}>0$ then $\lambda_{s,a} = 0$ and then:
$$\forall s, a | \theta_{s,a} > 0 \; \eta_{(\pi_{\theta_i}, br_{-i})}(s) q_{i, (\pi_{\theta_i}, br_{-i})}(s, a) = \lambda_s$$

-If $\theta_{s,a}=0$ then $\psi_{s,a} = 0$ and then:
    $$\eta_{(\pi_{\theta_i}, br_{-i})}(s) q_{i, (\pi_{\theta_i}, br_{-i})}(s, a) = \lambda_s + \lambda_{s,a}$$
Two cases (one stable and one unstable):
\begin{itemize}
    \item $\lambda_{s,a} \leq 0$ then we have a stable fixed point,
    \item $\lambda_{s,a} > 0$ is not stable as then we could increase the value by switching to that action.
\end{itemize}

We conclude by noticing that $\pi_{\theta_i}$ is greedy with respect to the value of the joint policy $(\pi_{\theta_i}, br_{-i})$, thus $\pi_{\theta_i}$ is a best response to $br_{-i}$. Since both policies are best responses to each other, $(\pi_{\theta_i}, br_{-i})$ is a Nash equilibrium. $\pi_{\theta_i}$ is also therefore unexploitable.

\section{Corrections to the Original Paper}
\label{app:errata}

We have modified this paper from its original version as a result of changes made to the theorem statements in~\cite{srinivasan2018actor}.
In particular, the $O(T)$ term was added to statements of Thereoms \ref{thm:edq} and \ref{thm:spgpi-br}.
Claims of TabularED($q$, $\ell_2$)'s guaranteed convergence to equilibria were removed. 
The claim of convergence of TabularED($q^c$, $\ell_2$) was unaffected since it reduces to CFR-BR(GIGA).
We have also modified the text to clarify the convergence properties.

\end{document}